\newcommand{\nop}[1]{}
\DeclareMathOperator*{\argmax}{arg\,max}
\DeclareMathOperator*{\argmin}{arg\,min}
\newtheorem{definition}{Definition}
\newtheorem{lemma}{Lemma}
\newtheorem{proposition}{Proposition}
\begin{document}
	
\twocolumn[
\icmltitle{Learning K-way D-dimensional Discrete Codes for Compact Embedding Representations}

\begin{icmlauthorlist}
	\icmlauthor{Ting Chen}{ucla}
    \icmlauthor{Martin Renqiang Min}{nec}
    \icmlauthor{Yizhou Sun}{ucla}
\end{icmlauthorlist}

\icmlaffiliation{ucla}{Department of Computer Science, University of California, Los Angeles}
\icmlaffiliation{nec}{NEC Laboratories America}

\icmlcorrespondingauthor{Ting Chen}{tingchen@cs.ucla.edu}
\icmlcorrespondingauthor{Martin Renqiang Min}{renqiang@nec-labs.com}
\icmlcorrespondingauthor{Yizhou Sun}{yzsun@cs.ucla.edu}

\icmlkeywords{KD Code, Representation Learning, Embedding, Model Compression}

\vskip 0.3in
]
\printAffiliationsAndNotice{}

\begin{abstract}	
Conventional embedding methods directly associate each symbol with a continuous embedding vector, which is equivalent to applying a linear transformation based on a ``one-hot'' encoding of the discrete symbols. Despite its simplicity, such approach yields the number of parameters that grows linearly with the vocabulary size and can lead to overfitting. In this work, we propose a much more compact K-way D-dimensional discrete encoding scheme to replace the ``one-hot" encoding. In the proposed ``KD encoding'', each symbol is represented by a $D$-dimensional code with a cardinality of $K$, and the final symbol embedding vector is generated by composing the code embedding vectors. To end-to-end learn semantically meaningful codes, we derive a relaxed discrete optimization approach based on stochastic gradient descent, which can be generally applied to any differentiable computational graph with an embedding layer. In our experiments with various applications from natural language processing to graph convolutional networks, the total size of the embedding layer can be reduced up to 98\% while achieving similar or better performance.
\end{abstract}

\vspace{-2em}
\section{Introduction}
Embedding methods, such as word embedding \cite{mikolov2013distributed,pennington2014glove}, have become pillars in many applications when learning from discrete structures. The examples include language modeling \cite{kim2016character}, machine translation \cite{sennrich2015neural}, text classification \cite{zhang2015character}, knowledge graph and social network modeling \cite{bordes2013translating,chen2017task}, and many others \cite{kipf2016semi,chen2016entity}. The objective of the embedding module in neural networks is to represent a discrete symbol, such as a word or an entity, with some continuous embedding vector $\bm{v}\in R^{d}$. This seems to be a trivial problem, at the first glance, in which we can directly associate each symbol with a learnable embedding vector, as is done in existing work. To retrieve the embedding vector of a specific symbol, an embedding table lookup operation can be performed. This is equivalent to the following: first we encode each symbol with an ``one-hot'' encoding vector $\bm{b} \in [0,1]^N$ where $\sum_j \bm{b}_j=1$ ($N$ is the total number of symbols), and then generate the embedding vector $\bm v$ by simply multiplying the ``one-hot'' vector $\bm{b}$ with the embedding matrix $W\in R^{N\times d}$, i.e. $\bm{v}=W^T\bm{b}$.

Despite the simplicity of this ``one-hot'' encoding based embedding approach, it has several issues. The major issue is that the number of parameters grows linearly with the number of symbols. This becomes very challenging when we have millions or billions of entities in the database, or when there are lots of symbols with only a few observations (e.g. Zipf's law). There also exists redundancy in the $O(N)$ parameterization, considering that many symbols are actually similar to each other. This over-parameterization can further lead to overfitting; and it also requires a lot of memory, which prevents the model from being deployed to mobile devices. Another issue is purely from the code space utilization perspective, where we find ``one-hot'' encoding is extremely inefficient. Its code space utilization rate is almost zero as $N / 2^N \rightarrow 0$ when $N \rightarrow \infty$, while $N$ dimensional discrete coding system can effectively represent $2^N$ symbols.

To address these issues, we propose a novel and much more compact coding scheme that replaces the ``one-hot'' encoding. In the proposed approach, we use a $K$-way $D$-dimensional code to represent each symbol, where each code has $D$ dimensions, and each dimension has a cardinality of $K$. For example, a concept of cat may be encoded as (5-1-3-7), and a concept of dog may be encoded as (5-1-3-9). The code allocation for each symbol is based on data and specific tasks such that the codes can capture semantics of symbols, and \textit{similar codes} should reflect \textit{similar meanings}. While we mainly focus on the encoding of symbols in this work, the learned discrete codes can have larger applications, such as information retrieval. We dub the proposed encoding scheme as ``\textit{KD encoding}''.

The KD code system is much more compact than its ``one-hot'' counterpart. To represent a set of symbols of size $N$, the ``KD encoding'' only requires $K^D\ge N$. Increasing $K$ or $D$ by a small amount, we can easily achieve $K^D \gg N$, in which case it will still be much more compact and keep $D = O(\frac{\log N}{\log K})$. Consider $K=2$, the utilization rate of ``KD encoding'' is $N/2^D$, which is $2^{N-D}$ times more compact than its ``one-hot'' counterpart\footnote{Assuming we have vocabulary size $N=10,000$ and the dimensionality $D=100$, it is $2^{9900}$ times more efficient.}.

The compactness of the code can be translated into compactness of the parametrization. Dropping the giant embedding matrix $W\in R^{N\times d}$ that stores symbol embeddings and leveraging semantic similarities between symbols, the symbol embedding vector is generated by composing much fewer code embedding vectors. This can be achieved as follows: first we embed each ``KD code'' into a sequence of code embedding vectors in $R^{D\times d'}$, and then apply embedding transformation function $\bm{f}(\cdot)$ to generate the final symbol embedding. By adopting the new approach, we can reduce the number of embedding parameters from $O(Nd)$ to $O(\frac{K}{\log K} d' \log N + C)$, where $d'$ is the code embedding size, and $C$ is the number of neural network parameters.

Due to the the discreteness of the code allocation problem, it is very challenging to learn the meaningful discrete codes that can exploit the similarities among symbols according to a target task in an end-to-end fashion. A compromise is to learn the code given a trained embedding matrix, and then fix the code during the stage of task-specific training. While this has been shown working relatively well in previous work \cite{chen2017learning,shu2017compress}, it produces a sub-optimal solution, and requires a multi-stage procedure that is hard to tune. In this work, we derive a relaxed discrete optimization approach based on stochastic gradient descent (SGD), and propose two guided methods to assist the end-to-end code learning. To validate our idea, we conduct experiments on three different tasks from natural language processing to graph convolutional networks for semi-supervised node classification. We achieve 95\% of embedding model size reduction in the language modeling task and 98\% in text classification with similar or better performance.
\section{The K-way D-dimensional Discrete Encoding Framework}
In this section, we introduce the ``KD encoding'' framework in details.
\subsection{Problem Formulation}
Symbols are represented with a vocabulary $V = \{s_1, s_2, \cdots, s_N\}$ where $s_i$ corresponds to the $i$-th symbol. Here we aim to learn a transformation function that maps a symbol $s_i$ to a continuous embedding vector $\bm{v}_i$, i.e. $\mathcal{T}: V \rightarrow \mathbb{R}^d$. In the case of conventional embedding method, $\mathcal{T}$ is a linear transformation of ``one-hot'' code of a symbol.

To measure the fitness of $\mathcal{T}$, we consider a differentiable computational graph $\mathcal{G}$ that takes discrete symbols as input $\bm{x}$ and outputs the predictions $\bm{y}$, such as text classification model based on word embeddings. We also assume a task-specific loss function $\mathcal{L}(\bm{x}, \bm{y})$ is given. The task-oriented learning of $\mathcal{T}$ is to learn $\mathcal{T}$ such that $\mathcal{L}(\bm{x}, \bm{y})$ is minimized, i.e. $\mathcal{T} = \argmin_{\mathcal{T}} \mathcal{L}(\bm{x}, \bm{y} | \mathcal{T}, \Theta)$ where $\Theta$ are task-specific parameters.
\subsection{The ``KD Encoding'' Framework}
In the proposed framework, each symbol is associated with a $K$-way $D$-dimensional discrete code. We denote the discrete code for the $i$-th symbol as $\bm{c}_i = (\bm{c}_i^1, \bm{c}_i^2, \cdots, \bm{c}_i^D) \in \mathcal{B}^D$, where $\mathcal{B}$ is the set of code bits with cardinality $K$. To connect symbols with discrete codes, a code allocation function $\bm\phi(\cdot): V \rightarrow \mathcal{B}^D$ is used. The learning of this mapping function will be introduced later, and once fixed it can be stored as a hash table for fast lookup. Since a discrete code $\bm c_i$ has $D$ dimensions, we do not directly use embedding lookup to find the symbol embedding as used in ``one-hot'' encoding. Instead, we want to learn an adaptive code composition function that takes a discrete code and generates a continuous embedding vector, i.e. $\bm{f}: \mathcal{B}^D\rightarrow R^d$. The details of $\bm f$ will be introduced in the next subsection. In sum, the ``KD encoding'' framework we have $\mathcal{T} = \bm f \circ \bm \phi$ with a ``KD code'' allocation function $\bm \phi$ and a composition function $\bm f$ as illustrated in Figure \ref{fig:framework}(a) and \ref{fig:framework}(b).

\begin{figure*}[]
	\begin{center}
		\includegraphics[trim=10 220 0 190,clip,width=\textwidth]{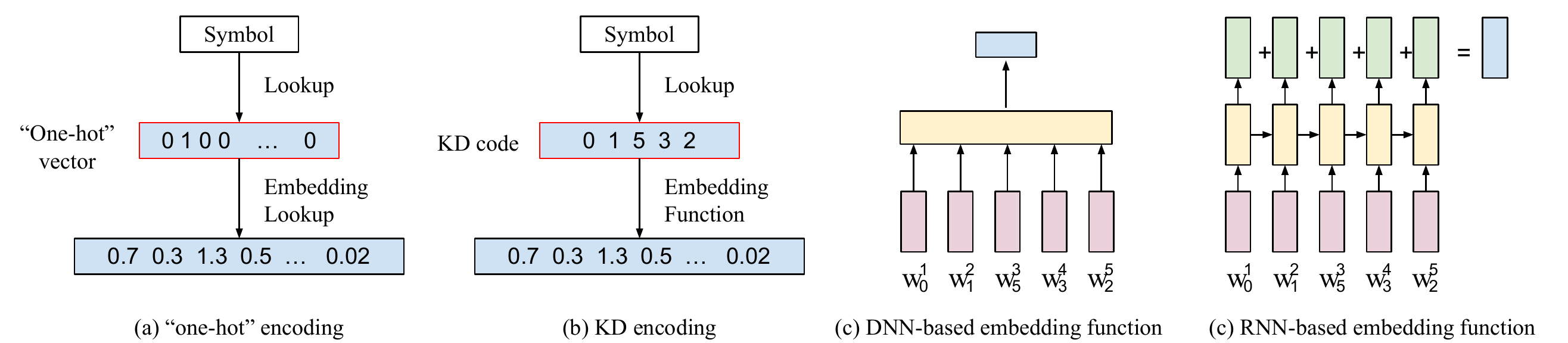}
	\end{center}
	\caption{\label{fig:framework} (a) The conventional symbol embedding based on ``one-hot'' encoding. (b) The proposed KD encoding scheme. (c) and (d) are examples of embedding transformation functions by DNN and RNN used in the ``KD encoding'' when generating the symbol embedding from its code.}
\end{figure*}

In order to uniquely identify every symbol, we only need to set $K^D = N$, as we can assign a unique code to each symbol in this case. When this holds, the code space is fully utilized, and none of the symbol can change its code without affecting other symbols. We call this type of code system \textit{compact code}. The optimization problem for compact code can be very difficult, and usually requires approximated combinatorial algorithms such as graph matching \cite{li2016lightrnn}. Realizing the difficulties in optimization, we propose to adopt the \textit{redundant code} system, where $K^D \gg N$, namely, there are a lot of ``empty'' codes with no symbol associated. Changing the code of one symbol may not affect other symbols under this scheme, since the random collision probability can be very small \footnote{For example, we can set $K=100, D=10$ for a billion symbols, in a random code assignment, the probability of the NO collision at all is 99.5\%.}, which makes it easier to optimize. The redundant code can be achieved by slightly increasing the size of $K$ or $D$ thanks to the exponential nature of their relations to $N$. Therefore, in both compact code or redundant code, it only requires $D = O(\frac{\log N}{\log K})$.

\subsection{Discrete Code Embedding}
As mentioned above, given learned $\bm\phi(\cdot)$ and the $i$-th symbol $s_i$, we can retrieve its code via a code lookup, i.e. $\bm{c}_i = \bm\phi(s_i)$. In order to generate the composite embedding vector $\bm{v}_i$, we adopt an adaptive code composition function $\bm v_i = \bm f(\bm c_i)$. To do so, we first embed the code $\bm{c}_i$ to a sequence of code embedding vectors $(\mathcal{W}^1_{\bm{c}_i^1}, \mathcal{W}^2_{\bm{c}_i^2}, \cdots, \mathcal{W}^D_{\bm{c}_i^D})$, and then apply another transformation $\bm v = \bm f_e(\mathcal{W}^1_{\bm{c}_i^1}, \mathcal{W}^2_{\bm{c}_i^2}, \cdots, \mathcal{W}^D_{\bm{c}_i^D}; \theta_e)$ to generate $\bm v$. Here $\mathcal{W}^j\in R^{K\times d'}$ is the code embedding matrix for the $j$-th code dimension, and $\bm f_e$ is the \textit{embedding transformation function} that maps the code embedding vectors to the symbol embedding vector. The choice of $f_e$ is very flexible and varies from task to task. In this work, we consider two types of embedding transformation functions. 

The first one is based on a linear transformation:
$$
\bm{v}_i = H\bigg(\sum_j \mathcal{W}^j_{\bm{c}_i^j}\bigg)^T, 
$$ 
where $H\in R^{d\times d'}$ is a transformation matrix for matching the dimensions. While this is simple and efficient, due to its linear nature, the capacity of the generated symbol embedding may be limited when the size of $K, D$ or the code embedding dimension $d'$ is small.

Another type of embedding transformation functions are nonlinear, and here we introduce one that is based on a recurrent neural network, LSTM \cite{hochreiter1997long}, in particular.  That is, we have $\bm (h_1, \cdots, h_j) =\text{LSTM}(\mathcal{W}^1_{\bm{c}^1}, \cdots,\mathcal{W}^j_{\bm{c}^j} )$ (see supplementary for details).

The final symbol embedding can be computed by summing over LSTM outputs at all code dimensions (and using a linear layer to match dimension if $d\ne d'$), i.e. $\bm{v} = H(\sum_j \bm{h}_j)^T$. Figure \ref{fig:framework}(c) and \ref{fig:framework}(d) illustrate the above two embedding transformation functions.

\subsection{Analysis of the Proposed ``KD Encoding''}
To measure the parameter and model size reduction, we first introduce two definitions as follows.
\begin{definition}
	(Embedding parameters) The embedding parameters are the parameters $\theta$ that are used in code composition function $\bm f$. Specifically, it includes code embedding matrices $\{\mathcal{W}\}$, as well as other parameters $\theta_e$ used in the embedding transformation function $\bm f_e$.
\end{definition}
It is worth noting that we do not explicitly include the code as embedding parameters. This is due to the fact that we do not count ``one-hot'' codes as parameters. Also in some cases the codes are not adaptively learned, such as hashed from symbols \cite{svenstrup2017hash}. However, when we export the model to embedded devices, the storage of discrete codes does occupy space. Hence, we introduce another concept below to take it into consideration as well.
\begin{definition}
	(Embedding layer's size) The embedding layer's size is the number of bits used to store both embedding parameters as well as the discrete codes.
\end{definition}
\begin{lemma}
	The number of embedding parameters used in KD encoding is $O(\frac{K}{\log K} d' \log N + C)$, where $C$ is the number of parameters of neural nets.
\end{lemma}
The proof is given in the supplementary material.

For the analysis of the embedding layer's size under ``KD encoding'', we assume that 32-bits floating point number is used. The total bits used by the ``KD encoding'' is $ND\log_2 K + 32(KDd' + C)$ consisting both code size as well as the size of embedding parameters. Comparing to the total model size by conventional full embedding, which is $32N(1+d)$, it can still be a huge saving of model space, especially when $N,d$ are large.

Here we provide a theoretical connection between the proposed ``KD encoding'' and the SVD or low-rank factorization of the embedding matrix. We consider the scenario where the composition function $\bm f$ is a linear function with no hidden layer, that is $\bm{v}_i = (\sum_j \mathcal{W}^j_{\bm{c}_i^j})^T$.
\begin{proposition}
	A linear composition function $\bm f$ with no hidden layer is equivalent to a sparse binary low-rank factorization of the embedding matrix.
\end{proposition}
The proof is also provided in the supplementary material. But the overall idea is that the ``$KD$ code'' mimics an 1-out-of-$K$ selection within each of the $D$ groups. 

The computation overhead brought by linear composition is very small compared to the downstream neural network computation (without hidden layer in linear composition function, we only need to sum up $D$ vectors). However, the expressiveness of the linear factorization is limited by the number of bases or rank of the factorization, which is determined by $K$ and $D$. And the use of non-linear composition function can largely increase the expressiveness of the composite embedding matrix and may be an appealing alternative, this is shown by the proposition 2 in supplementary.
\section{End-to-End Learning of the Discrete Code}

In this section, we propose methods for learning task-specific ``KD codes''.
\subsection{Continuous Relaxation for Discrete Code Learning}
As mentioned before, we want to learn the symbol-to-embedding-vector mapping function, $\mathcal{T}$, to minimize the target task loss, i.e. $\mathcal{T} = \argmin_{\mathcal{T}} \mathcal{L}(\bm{x}, \bm{y} | \mathcal{T}, \Theta)$. This includes optimizing both code allocation function $\bm \phi(\cdot)$ and code composition function $\bm f(\cdot)$. While $\bm f(\cdot)$ is differentiable w.r.t. its parameters $\theta$, $\bm \phi(\cdot)$ is very challenging to learn due to the discreteness and non-differentiability of the codes.

Specifically, we are interested in solving the following optimization problem,
\begin{equation}
\min_{\{\bm{c}\}, \theta, \Theta} \sum_i \mathcal{L} \bigg(\bm{x}_i, \bm{y}_i| \bm f_e\bigg(\mathcal{W}^1_{\bm{c}_i^1}, \mathcal{W}^2_{\bm{c}_i^2}, \cdots,\mathcal{W}^D_{\bm{c}_i^D}\bigg), \Theta \bigg)
\end{equation}
where $f_e$ is the embedding transformation function mapping code embedding to the symbol embedding, $\theta=\{\mathcal{W}, \theta_e\}$ contains code embeddings and the composition parameters, and $\Theta$ denotes other task-specific parameters.

We assume the above loss function is differentiable w.r.t. to the continuous parameters including embedding parameters $\theta$ and other task-specific parameters $\Theta$, so they can be optimized by following standard stochastic gradient descent and its variants \cite{kingma2014adam}. However, each $\bm{c}_i$ is a discrete code, it cannot be directly optimized via SGD as other parameters. In order to adopt gradient based approach to simplify the learning of discrete codes in an end-to-end fashion, we derive a continuous relaxation of the discrete code to approximate the gradient effectively. 

We start by making the observation that each code $\bm{c}_i$ can be seen as a concatenation of $D$ ``one-hot'' vectors, i.e. $\bm{c}_i = (\bm{o}^{1}_{i}, \bm{o}^{2}_i, \cdots, \bm{o}^{D}_i)$, where $\forall j, \bm{o}^j_i\in [0, 1]^K$ and $\sum_k \bm{o}^{jk}_i = 1$, where $\bm{o}^{jk}_i$ is the $k$-th component of $\bm{o}^{j}_i$. To make it differentiable, we relax the $\bm{o}_i$ from a ``one-hot'' vector to a continuous relaxed vector $\bm{\hat{o}}_i$ by applying \textit{tempering Softmax}:
$$
\bm{o}^{jk}_i \approx \bm{\hat{o}}^{jk}_i = \frac{\exp({\bm{\pi}}^{jk}_i/\tau)}{\sum_{k'} \exp({\bm{\pi}}^{jk'}_i/\tau)}
$$
Where $\tau$ is a temperature term, as $\tau\rightarrow 0$, this approximation becomes exact (except for the case of ties). We show this approximation effects for $K=2$ with $y = 1/(1+\exp(-x/\tau))$ in Figure \ref{fig:sigmoid_wtemp}. Similar techniques have been introduced in Gumbel-Softmax trick \cite{jang2016categorical,maddison2016concrete}.

\begin{figure}
	\centering
	\begin{subfigure}[b]{0.21\textwidth}
		\includegraphics[width=\textwidth]{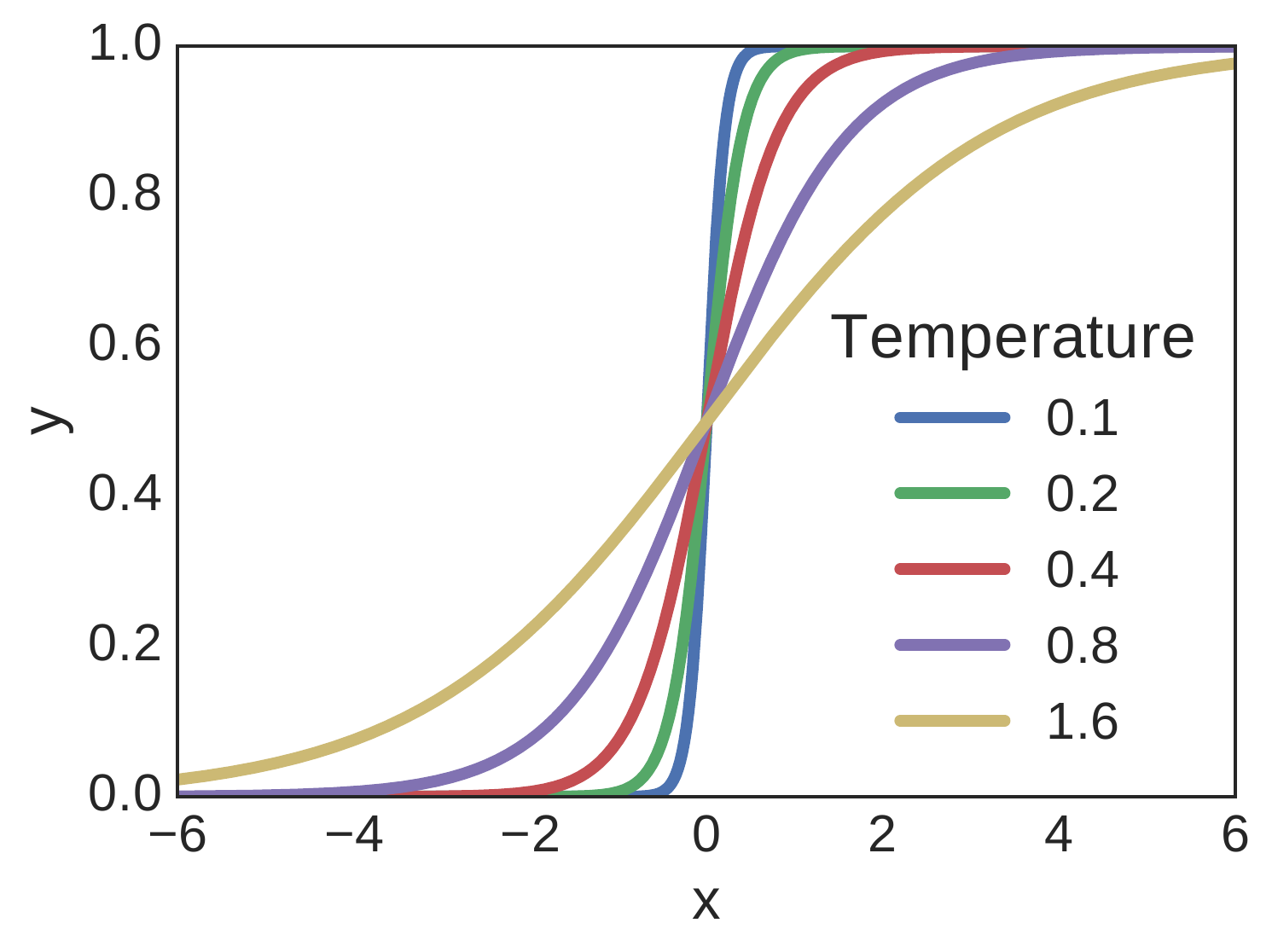}
		\caption{The output probabilities.}
		\label{fig:sigmoid_wtemp}
	\end{subfigure}
	\begin{subfigure}[b]{0.22\textwidth}
		\includegraphics[width=\textwidth]{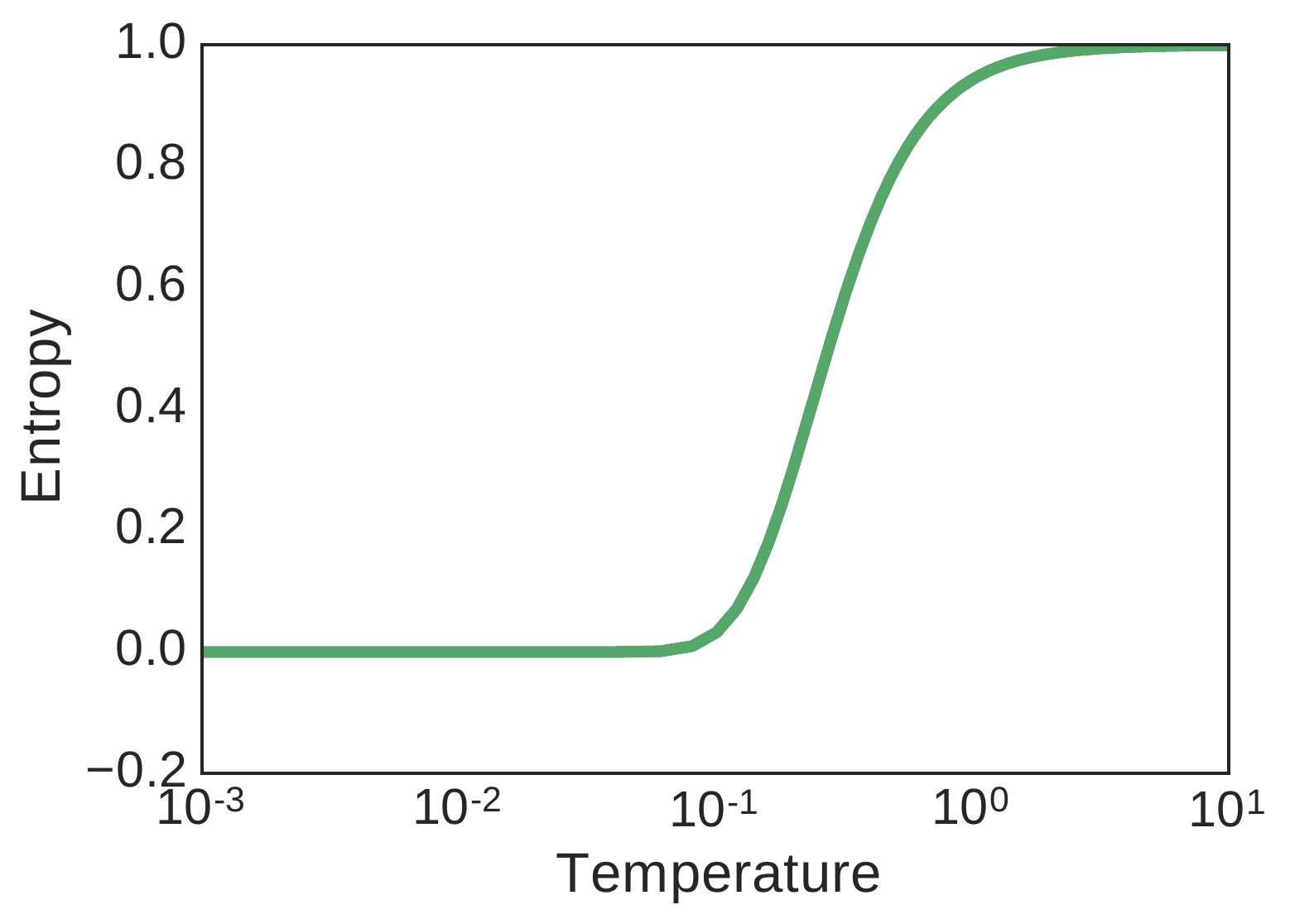}
		\caption{The entropy.}
		\label{fig:entropy_wtemp}
	\end{subfigure}
	\caption{The effects of temperature $\tau$ on output probability of Softmax and its entropy (when $K=2$). As $\tau$ decreases, the probabilistic output approximates step function when $K=2$, and generally ``one-hot'' vector when $K>2$.}\label{fig:wtemp}
\end{figure}

Since $\bm{\hat{o}}_i$ is continuous (given $\tau$ is not approaching 0), instead of learning the discrete code assignment directly, we learn $\bm{\hat{o}}_i$ as an approximation to ${\bm{o}}_i$. To do so, we can adjust the code logits $\bm \pi_i$ using SGD and gradually decrease the temperature $\tau$ during the training. Since the indexing operator for retrieval of code embedding vectors, i.e. $\mathcal{W}^j_{\bm{c}^j_i}$, is non-differentiable, to generate the embedding vector for $j$-th code dimension, we instead use an affine transformation operator, i.e. $(\mathcal{W}^j)^T \bm{\hat{o}}^j_i$, which enables the gradient to flow backwards normally.

It is easy to see that control of temperature $\tau$ can be important. When $\tau$ is too large, the output $\bm{\hat{o}}_i$ is close to uniform, which is too far away from the desired ``one-hot'' vector $\bm o_i$. When $\tau$ is too small, the slight differences between different logits $\bm \pi^j_i$ and $\bm \pi^{j'}_i$ will be largely magnified. Also, the gradient vanishes when the Softmax output approaches ``one-hot'' vector, i.e. when it is too confident. A ``right'' schedule of temperature can thus be crucial. While we can handcraft a good schedule of temperature, we also observe that the temperature $\tau$ is closely related to the entropy of the output probabilistic vector, as shown in Figure \ref{fig:entropy_wtemp}, where a same set of random logits can produce probabilities of different entropies when $\tau$ varies. This motivates us to implicitly control the temperature via regularizing the entropy of the model. To do so, we add the following entropy regularization term:
$
\mathbb{H} = -\sum_{i,j,k} \bm{\hat{o}}^{jk}_i\log \bm{\hat{o}}^{jk}.
$ A large penalty for this regularization term encourages a small entropy for the relaxed codes, i.e. a more spiky distribution.

Up to this point, we still use the continuous relaxation $\bm{\hat{o}}_i$ to approximate $\bm o_i$ during the training. In inference, we will only use discrete codes. The discrepancy of the continuous and discrete codes used in training and inference is undesirable. To close the gap, we take inspiration from Straight-Through Estimator \cite{bengio2013estimating}. In the forward pass, instead of using the relaxed tempering Softmax output $\bm{\hat{o}}_i$, which is likely a smooth continuous vector, we take its $\argmax$ and turn it into a ``one-hot'' vector as follows, which recovers a discrete code.
$$
\bm o^j_i = \text{one\_hot}\bigg(\argmax_k \bm{\hat{o}}^{jk}_i\bigg) \approx \text{Softmax}\bigg(\frac{\bm \pi^j_i}{\tau}\bigg)\text{,~~~}\tau\rightarrow 0
$$
We interpret the use of straight-through estimator as using different temperatures during the forward and backward pass. In forward pass, $\tau\rightarrow 0$ is used, for which we simply apply the $\argmax$ operator. In the backward pass (to compute the gradient), it pretends that a larger $\tau$ was used. Compared to using the same temperature in both passes, this always outputs ``one-hot'' discrete code $\bm{o}^j_i$, which closes the previous gap between training and inference.

The training procedure is summarized in Algorithm \ref{algo:STE_softmax}, in which the \texttt{stop\_gradient} operator will prevent the gradient from back-propagating through it.

\begin{algorithm}
	\small
	\caption{An epoch of code learning via Straight-through Estimator with Tempering Softmax.}
	\label{algo:STE_softmax}
	\begin{algorithmic}
	\STATE {\bfseries Parameters:} code logits $\{\bm \pi_i\}$, code embedding matrices $\{\mathcal{W}^j\}$, transformation parameters $\theta_e$, and other task specific parameters $\Theta$.
	\FOR{$i \gets 1$ \textbf{to} $N$}
		\FOR{$j \gets 1$ \textbf{to} $D$}
			\STATE $\bm{\hat{o}}^j_i = \text{Softmax}(\bm \pi^j_i/\tau)$
			\STATE $\bm{o}^j_i = \text{one\_hot}(\argmax_k \hat{\bm{o}}^{jk}_i)$
			\STATE $\bm{o}^j_i = \texttt{stop\_gradient}(\bm{o}^j_i - \bm{\hat{o}}^j_i) + \bm{\hat{o}}^j_i$
		\ENDFOR
		\STATE A step of SGD on $\bm \pi_i, \{\mathcal{W}^j\}, \theta_e, \Theta$ to reduce $\mathcal{L}\bigg(\bm{x}_i, \bm{y}_i, \bm f_e\bigg((\bm{o}^1_i)^T\mathcal{W}^1, \cdots, (\bm{o}^D_i)^T  \mathcal{W}^D; \theta_e\bigg), \Theta\bigg)$
	\ENDFOR
	\end{algorithmic}
\end{algorithm}

\subsection{Code Learning with Guidances}

It is not surprising the optimization problem is more challenging for learning discrete codes than learning conventional continuous embedding vectors, due to the discreteness of the problem (which can be NP-hard). This could lead to a suboptimal solution where discrete codes are not as competitive. Therefore, we propose to use  guidances from the continuous embedding vectors to mitigate the problem. The basic idea is that instead of adjusting codes according to noisy gradients from the end task as shown above, we also require the composite embedding vectors from codes to mimic continuous embedding vectors, which can be either jointly trained (online distillation guidance), or pre-trained (pre-train distillation guidance). The continuous embedding can provide better signals for both code learning as well as the rest parts of the neural network, improve the training subsequently.

\paragraph{Online Distillation Guidance (ODG).}
A good learning progress in code allocation function $\phi(\cdot)$ can be important for the rest of the neural network to learn. For example, it is hard to imagine we can train a good model based on ``KD codes'' if we have $\bm\phi(\text{``table''})=\bm\phi(\text{``cat''})$. However, the learning of the $\bm \phi(\cdot)$ also depends on the rest of network to provide good signals.

Based on the observation, we propose to associate a regular continuous embedding vector $\bm{u}_i$ with each symbol during the training, and we want the ``KD encoding'' function $\mathcal{T}(\cdot)$ to mimic the continuous embedding vectors, while both of them are simultaneously optimized for the end task. More specifically, during the training, instead of using the embedding vector generated from the code, i.e. $\bm{f}(\bm{c}_i)$, we use a dropout average of them, i.e. $$\bm{v}_i = m \odot \bm{u}_i + (1-m)\odot \bm{f}(\bm{c}_i).$$
Here $m$ is a Bernoulli random variable for selecting between the regular embedding vector or the KD embedding vector. When $m$ is turned on with a relatively high probability (e.g. 0.7), even if $\bm f(\bm{c}_i)$ is difficult to learn, $\bm{u}_i$ can still be learned to assist the improvement of the task-specific parameters $\Theta$, which in turn helps code learning. During the inference, we only use $\bm f(\bm{c}_i)$ as output embedding. This choice can lead to a gap between training and generalization errors. Hence, we add a regularization loss $\lambda\|\bm{u}_i-\bm{f}(\bm{c}_i)\|^2$ during the training that encourages the match between $\bm{u}_i$ and $\bm{f}(\bm{c}_i)$\footnote{Here we use $\text{stop\_gradient}(\bm{u}_i)$ to prevent embedding vectors $\bm u$ being dragged to $\bm{f}(\bm{c}_i)$ as it has too much freedom.}.

\paragraph{Pre-trained Distillation Guidance (PDG).}
\begin{figure}[]
	\begin{center}
		\includegraphics[trim=100 225 60 150,clip,width=0.5\textwidth]{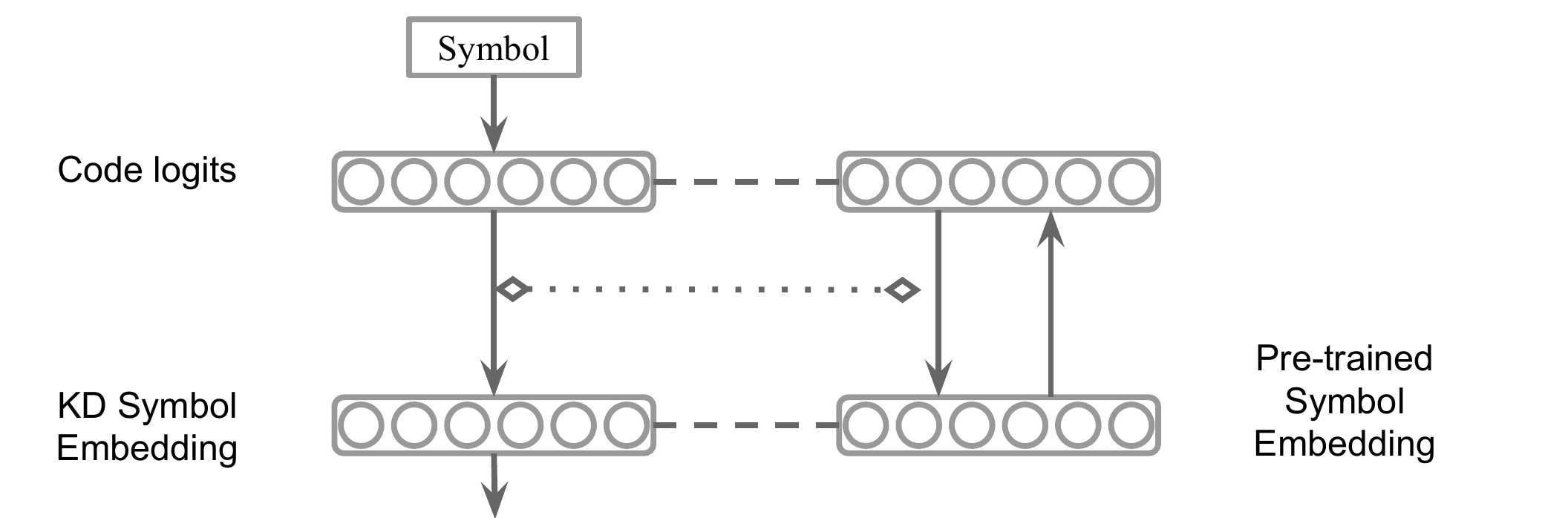}
	\end{center}
	\caption{\label{fig:guidance} Online Distillation Guidance. Dashed lines denotes regularization, doted line in the middle denotes sharing of transformation function.}
\end{figure}
It is important to close the gap between training and inference in the online distillation guidance process, unfortunately this can still be difficult. Alternatively, we can also adopt pre-trained continuous embedding vectors as guidance. Instead of training the continuous embedding vectors alongside the discrete codes, we utilize a pre-trained continuous embedding matrix $\mathcal{U}$ produced from the same model with conventional continuous embedding vectors. During the end-to-end training of the codes (as well as other parameters), we ask the composite embedding vector $\bm v_i$ generated by ``KD encoding'' to mimic the the given embedding vector $\bm u_i$ by minimizing the $l_2$ distance. 

Furthermore, we can build an auto-encoder of the pre-trained continuous embedding vectors, and use both continuous embedding vectors as well as the code logits as guidances. In the encoding pass, a transformation function $\bm g(\cdot)$ is used to map $\bm u_i$ to the code logits $\bm{\pi}_i$. In its decoding pass, it utilizes the same transformation function $\bm f(\cdot)$ that is used in ``KD encoding'' to reconstruct $\bm u_i$. The loss function for the auto-encoders is
$$
\mathcal{L}_{auto-encoder} = \sum_i \|\bm f(\bm g(\bm u_i); \tau) - \bm u_i\|^2
$$
To follow the guidance of the pre-trained embedding matrix $\mathcal{U}$, we ask the code logits $\bm \pi_i$ and composite symbol embedding $\bm v_i = \bm f(\bm \pi_i; \tau)$ \footnote{Here we overload the function $\bm f(c_i)$ by considering that code $c_i$ can be turned into ``one-hot'' $\bm o_i$, and $\bm o_i\approx Softmax(\bm \pi_i/\tau)$.} to mimic the ones in the auto-encoder as follows
$$
\mathcal{L}_{distillation} = \sum_{i}\alpha \|\bm f(\bm \pi_i;\tau) - \bm u_i\|^2 + \beta \|\bm \pi_i - \bm g(\bm u_i)\|^2
$$
During the training, both $\mathcal{L}_{auto-encoder}$ and $\mathcal{L}_{distillation}$ will be added to the task-specific loss function to train jointly. The method is illustrated in the Figure \ref{fig:guidance}.

Here we also make a distinction between pre-trained distillation guidance (PDG) and pre-training of codes.  Firstly, PDG can learn codes end-to-end to optimize the task's loss, while the pre-trained codes will be fixed during the task learning. Secondly, the PDG training procedure is much easier, especially for the tuning of discrete code learning, while pre-training of codes requires three stages and is unfriendly for parameter tuning.
\section{Experiments}

In this section, we conduct experiments to validate the proposed approach. Since the proposed ``KD Encoding" can be applied to various tasks and applications with embedding layers involved. We choose three important tasks for evaluation, they are (1) language modeling, (2) text classification, and (3) graph convolutional networks for semi-supervised node classification. For the detailed descriptions of these tasks and other applications of our method, we refer readers to the supplementary material.

For the language modeling task, we test on the widely used English Penn Treebank \cite{marcus1993building} dataset, which contains 1M words with vocabulary size of 10K. The training/validation/test split is provided by convention according to \cite{mikolov2010recurrent}. Since we only focus on the embedding layer, we simply adopt a previous state-of-the-art model \cite{zaremba2014recurrent}, in which they provide three different variants of LSTMs \cite{hochreiter1997long} of different sizes: The larger model has word embedding size and LSTM hidden size of 1500, while the number is 650 and 200 for the medium and small models. By default, we use $K=32, D=32$ and pre-trained distillation guidance for the proposed method, and linear embedding transformation function with 1 hidden layer of 300 hidden units.

For the text classification task, we utilize five different datasets from \cite{zhang2015character}, namely Yahoo! news, AG's news, DBpedia, Yelp review polarity ratings as well Yelp review full-scale ratings \footnote{YahooAnswers has 477K unique words and 131M tokens, and Yelp has 268K unique words and 94M tokens. More details available in \cite{zhang2015character}.}. We adopt network architecture used in FastText \cite{joulin2016bag,joulin2016fasttext}, where a SoftMax is stacked on top of the averaged word embedding vectors of the text. For simplicity, we only use unigram word information but not sub-words or bi-grams, as used in their work. The word embedding dimension is chosen to be 300 as it yields a good balance between size and performance. By default, we use $K=32, D=32$ for the proposed method, and linear transformation with no hidden layer. That is to add code embedding vectors together to generate symbol embedding vector, and the dimension of code embedding is the same as word embedding.

For the application with graph convolutional networks, we follow the same setting and hyper-parameters as in \cite{kipf2016semi}. Three datasets are used for comparison, namely Cora, Citeseer, Pubmed. Since both the number of symbols (1433, 3703, and 500 respectively) as well as its embedding dimension (16) are small, the compressible space is actually quite small. Nevertheless, we perform the proposed method with $K=64, D=8$ for Cora and Citeseer, and $K=32, D=4$ for Pubmed. Again, a linear embedding transformation function is used with one hidden layer of size 16. We do not use guidances for text classification and graph node classification tasks since the direct optimization is already satisfying enough.

\begin{table}[!t]
	\centering
	\small
	\caption{Language modeling (PTB). Compared with Conventional full embedding, and low-rank (denoted with Lr) with different compression rates.}
	\label{tab:base_lm}
	\begin{tabular}{llcccc}
		\hline\hline
		& Model  & Full   & Lr(5X)         & Lr(10X)      & Ours             \\ \hline
		\multirow{3}{*}{Perplexity}                                                   & Small  & 114.53 & 134.01         & 134.89        & \textbf{107.77} \\
		& Medi. & 83.38  & 84.84          & 85.53         & \textbf{83.11}  \\
		& Large  & 78.71  & 81.23          & 81.85          & \textbf{77.72}  \\ \hline
		\multirow{3}{*}{\begin{tabular}[c]{@{}c@{}}\# of emb.\\ params. \\(M)\end{tabular}} & Small  & 2.00   & 0.40           & \textbf{0.19} & 0.37            \\
		& Medi. & 6.50   & 1.30           & 0.65          & \textbf{0.50}   \\
		& Large  & 15.00  & 2.99           & 1.50          & \textbf{0.76}   \\ \hline
		\multicolumn{1}{l}{\multirow{3}{*}{\begin{tabular}[c]{@{}c@{}}\# of bits\\ (M)\end{tabular}}}                                 & Small  & 64.00  & 12.73 & \textbf{6.20}          & 13.39           \\
		\multicolumn{1}{r}{}                                                          & Medi. & 208.00 & 41.58          & 20.79         & \textbf{17.75}  \\
		\multicolumn{1}{r}{}                                                          & Large  & 480.00 & 95.68          & 47.84         & \textbf{26.00}  \\ \hline
		\hline
	\end{tabular}
\end{table}
\begin{table}[!t]
	\centering
	\small
	\caption{Text classification. Lr denotes low-rank.}
	\label{tab:base_text}
	\begin{tabular}{llcccc}
		\hline\hline
		& Model   & Full           & Lr(10X) & Lr(20X) & Ours            \\ \hline
		\multirow{5}{*}{Accuracy}                                                   & Yahoo!  & \textbf{0.698} & 0.695  & 0.691  & 0.695          \\
		& AG N. & 0.914          & 0.914  & 0.915  & \textbf{0.916} \\
		& Yelp P. & \textbf{0.932} & 0.924  & 0.923  & 0.931          \\
		& Yelp F. & \textbf{0.592} & 0.578  & 0.573  & 0.590          \\
		& DBpedia & 0.977          & 0.977  & 0.979  & \textbf{0.980} \\ \hline
		\multirow{5}{*}{\begin{tabular}[c]{@{}c@{}}\# of emb.\\ params. \\ (M)\end{tabular}} & Yahoo!  & 143.26       & 13.857 & 6.690  & \textbf{0.308} \\
		& AG N. & 20.797         & 2.019  & 0.975  & \textbf{0.308} \\
		& Yelp P. & 74.022         & 7.164  & 3.459  & \textbf{0.308} \\
		& Yelp F. & 80.524         & 7.793  & 3.762  & \textbf{0.308} \\
		& DBpedia & 183.76        & 17.772 & 8.580  & \textbf{0.308} \\ \hline
		\multicolumn{1}{l}{\multirow{3}{*}{\begin{tabular}[c]{@{}c@{}}\# of bits\\ (G)\end{tabular}}}                               & Yahoo!  & 4.584          & 0.443  & 0.214  & \textbf{0.086} \\
		\multicolumn{1}{r}{}                                                          & AG N. & 0.665          & 0.065  & 0.031  & \textbf{0.021} \\
		\multicolumn{1}{r}{}                                                          & Yelp P. & 2.369          & 0.229  & 0.111  & \textbf{0.049} \\
		\multicolumn{1}{r}{}                                                          & Yelp F. & 2.577          & 0.249  & 0.120  & \textbf{0.053} \\
		\multicolumn{1}{r}{}                                                          & DBpedia & 5.880          & 0.569  & 0.275  & \textbf{0.108} \\ \hline
		\hline
	\end{tabular}
\end{table}
We mainly compare the proposed ``KD encoding'' approach with the conventional continuous (full) embedding counterpart, and also compare with low-rank factorization \cite{sainath2013low} with different compression ratios. The results for three tasks are shown in Table \ref{tab:base_lm}, \ref{tab:base_text}, \ref{tab:base_gcn}, respectively. In these tables, three types of metrics are shown: (1) the performance metric, perplexity for language modeling and accuracy for the others, (2) the number of embedding parameters $\theta$ used in $\bm f$, and (3) the total embedding layer's size includes $\theta$ as well as the codes. From these tables, we observe that the proposed ``KD encoding'' with end-to-end code learning perform similarly, or even better in many cases, while consistently saving more than 90\% of embeding parameter and model size, 98\% in the text classification case. In order to achieve similar level of compression, we note that low-rank factorization baseline will reduce the performance significantly.
\begin{table}[!t]
	\centering
	\small
	\caption{Graph Convolutional Networks. Lr denotes low-rank.}
	\label{tab:base_gcn}
	\begin{tabular}{lccccc}
		\hline
		\hline
		&    Dataset     & Full   & Lr(2X)         & Lr(4X) & Ours        \\ \hline
		\multirow{3}{*}{Accuracy}                                                         & Cora    & 0.814  & 0.789          & 0.767 & \textbf{0.823} \\
		& Citese. & 0.721  & 0.710          & 0.685 & \textbf{0.723} \\
		& Pubm.   & 0.795  & 0.773          & 0.780 & \textbf{0.797} \\ \hline
		\multirow{3}{*}{\begin{tabular}[c]{@{}c@{}}\# of emb.\\ params.\\ (K)\end{tabular}} & Cora    & 22.93 & 10.14         & \textbf{5.8} & 8.22 \\
		& Citese. & 59.25 & 26.03         & 14.88 & \textbf{8.22} \\
		& Pubm.   & 8.00  & 3.61          & \textbf{2.06} & 2.69 \\ \hline
		\multicolumn{1}{l}{\multirow{3}{*}{\begin{tabular}[c]{@{}c@{}}\# of bits\\ (M)\end{tabular}}}                               & Cora    & 0.73  & 0.32 & \textbf{0.19} & 0.33          \\
		\multicolumn{1}{l}{}                                                          & Citese. & 1.90  & 0.83          & 0.48 & \textbf{0.44} \\
		\multicolumn{1}{r}{}                                                          & Pubm.   & 0.26  & 0.12          & \textbf{0.07} & 0.10 \\ \hline
		\hline
	\end{tabular}
\end{table}

We further compare with broader baselines on language modeling tasks (with medium sized language model for convenience): (1) directly using first 10 chars of a word as its code (padding when necessary), (2) training aware quantization \cite{jacob2017quantization}, and (3) product quantization \cite{jegou2011product,joulin2016fasttext}. The results are shown in Table \ref{tab:baselines}. We can see that our methods significantly outperform these baselines, in terms of both PPL as well as model size (bits) reduction.

\begin{table}[t]
	\small
\centering
\caption{Comparisons with more baselines in Language Modeling (Medium sized model).}
\label{tab:baselines}
\begin{tabular}{lcc}
\hline
\hline
Methods                       & PPL & Bits saved \\ \hline
Char-as-codes                 & 108.14     & 96\%         \\
Scalar quantization (8 bits)  & 84.06      & 75\%         \\
Scalar quantization (6 bits)  & 87.73      & 81\%         \\
Scalar quantization (4 bits)  & 92.86      & 88\%         \\
Product quantization(64x325)  & 84.03      & 88\%         \\
Product quantization(128x325) & 83.71      & 85\%         \\
Product quantization(256x325) & 83.66      & 81\%         \\ \hline
Ours                          & 83.11      & 92\%         \\ \hline \hline
\end{tabular}
\end{table}
In the following, we scrutinize different components of the proposed model based on PTB language modeling. To start with, we test various code learning methods, and demonstrate the impact of training with guidance. The results are shown in Table \ref{tab:diff_coding}. First, we note that both random codes as well as pre-trained codes are suboptimal, which is understandable as they are not (fully) adaptive to the target tasks. Then, we see that end-to-end training without guidance suffers serious performance loss, especially when the task specific networks increase its complexity (with larger hidden size and use of dropout). Finally, by adopting the proposed continuous guidances (especially distillation guidance), the performance loss can be overcame.
\begin{table}[!t]
	\small
	\centering
	\caption{Comparisons of different code learning methods.}
	\label{tab:diff_coding}
	\begin{tabular}{llcc}
		\hline\hline
		& Small     & Medium   & Large    \\ \hline
		Full embedding         & 114.53          & 83.38        & 78.71          \\\hline
		Random code         & 115.79          & 104.12         & 98.38          \\
		Pre-trained code       & 107.95          & 84.92          & 80.69          \\\hline
		Ours (no guidance) & 108.50          & 89.03          & 86.41          \\
		Ours (ODG)  & 108.19          & 85.50         & 83.00          \\
		Ours (PDG)  & \textbf{107.77} & \textbf{83.11} & \textbf{77.72} \\ \hline
		\hline
	\end{tabular}
\end{table}
We further vary the size of $K$ or $D$ and see how they affect the performance. As shown in Figure \ref{fig:kd_mean_ppl} and \ref{fig:kd_rnn_ppl}, small K or D may harm the performance (even though that $K^D \gg N$ is satisfied), which suggests that the redundant code can be easier to learn. The size of $D$ seems to have higher impact on the performance compared to $K$. Also, when $D$ is small, non-linear encoder such as RNN performs much better than the linear counterpart, which verifies our Proposition 2.
\begin{figure}[!t]
	\small
	\centering
	\begin{subfigure}[b]{0.24\textwidth}
		\includegraphics[width=\textwidth]{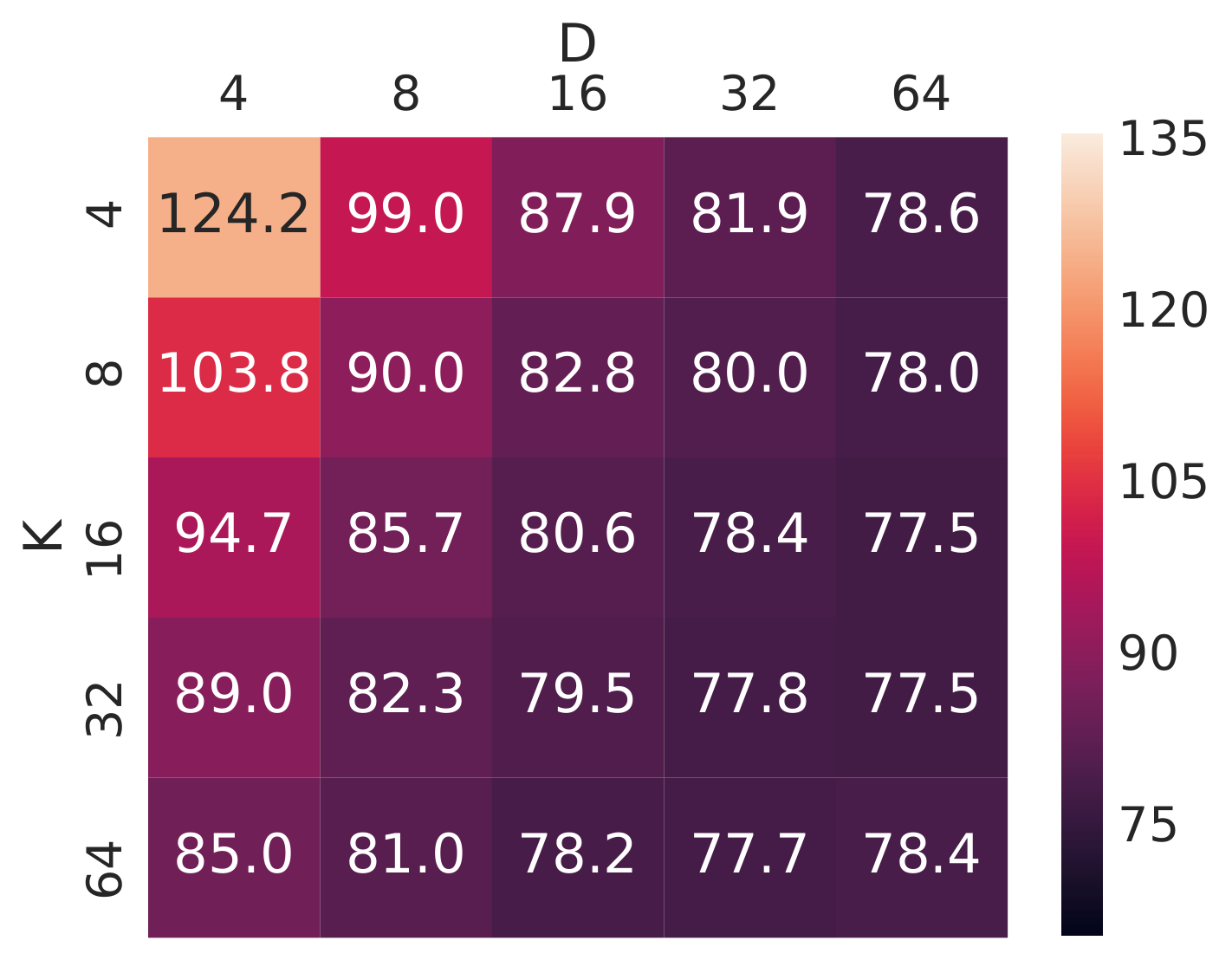}
		\caption{Linear instantiation.}
		\label{fig:kd_mean_ppl}
	\end{subfigure}%
	~ 
	\begin{subfigure}[b]{0.24\textwidth}
		\includegraphics[width=\textwidth]{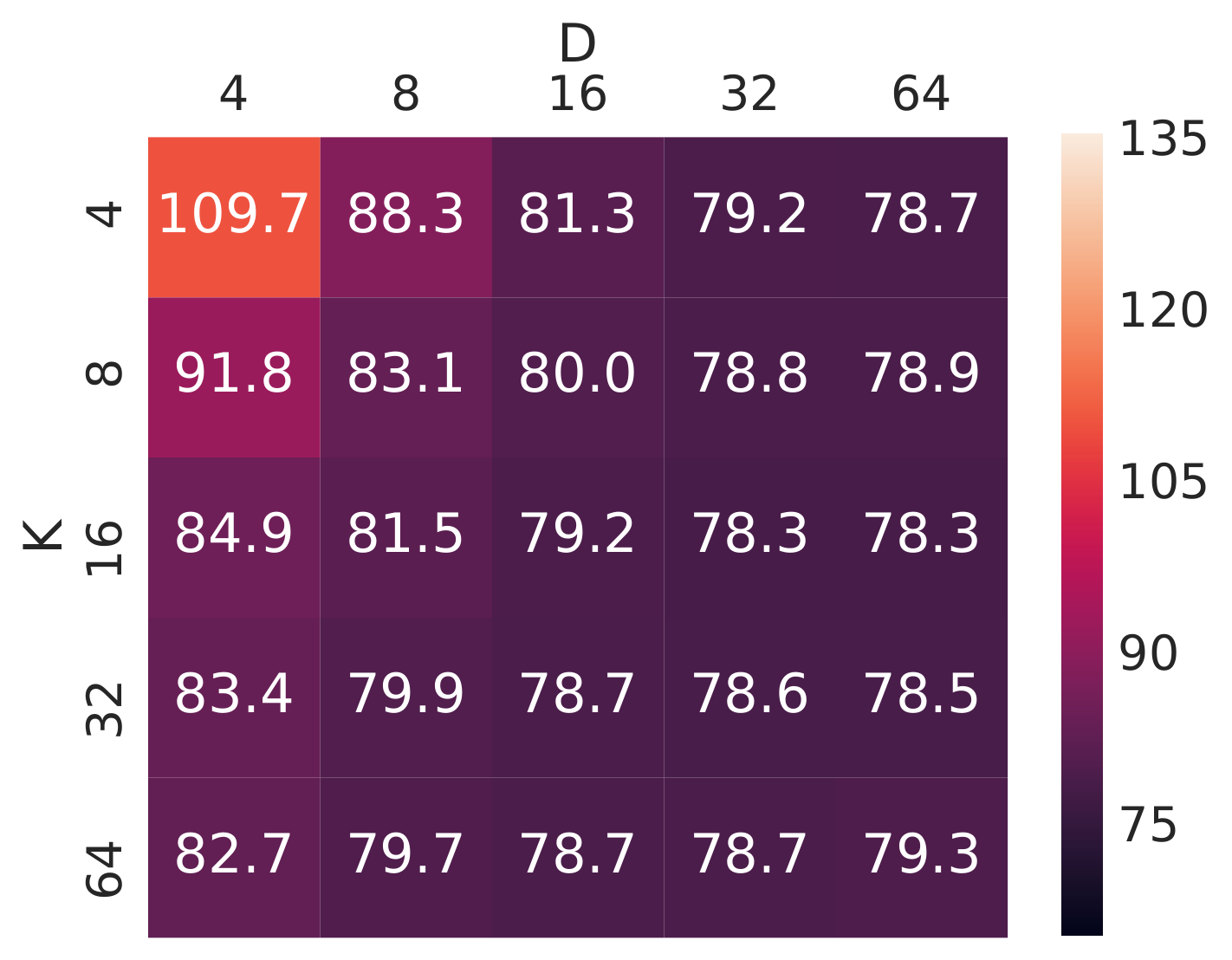}
		\caption{RNN instantiation.}
		\label{fig:kd_rnn_ppl}
	\end{subfigure}
	\caption{The effects of various K and D under different instantiation of embedding transformation function $\bm f(\cdot)$.}\label{fig:kd}
\end{figure}
\begin{table}[!t]
\centering
\small
\caption{Learned codes for 10K Glove embeddings (K=6, D=4).}
\label{tab:code_demo}
\begin{tabular}{lp{20em}}\Xhline{2\arrayrulewidth}
	Code    & Words\\ \hline
	3-1-0-3 & up when over into time back off set left open half behind quickly starts\\
	3-1-0-4 & week tuesday wednesday monday thursday friday sunday saturday \\
	3-1-0-5 & by were after before while past ago close soon recently continued meanwhile \\
	3-1-1-1 & year month months record fall annual target cuts      \\\Xhline{2\arrayrulewidth}
\end{tabular}
\end{table}
To examine the learned codes, we apply our method on the pre-trained embedding vectors from Glove \cite{pennington2014glove}, which has better coverage and quality. We force the model to assign multiple words to the same code by setting $K=6, D=4$ (code space is 1296) for vocabulary size of 10K. Table \ref{tab:code_demo} show a snippet of the learned codes, which shows that semantically similar words are assigned to the same or close-by discrete codes.
\section{Related Work}

The idea of using more efficient coding system traces to information theory, such as error correction code \cite{hamming1950error}, and Hoffman code \cite{huffman1952method}. However, in most embedding techniques such as word embedding \cite{mikolov2013distributed,pennington2014glove}, entity embedding \cite{chen2016entity,chen2017task}, ``one-hot'' encoding is used along with a usually large embedding matrix. Recent work \cite{kim2016character,sennrich2015neural,zhang2015character} explores character or sub-word based embedding model instead of the word embedding model and show some promising results. \cite{svenstrup2017hash} proposes using hash functions to automatically map texts to pre-defined bases with a smaller vocabulary size, according to which vectors are composed. However, in their cases, the chars, sub-words and hash functions are fixed and given a priori dependent on language, thus may have few semantic meanings attached and may not be available for other type of data. In contrast, we learn the code assignment function from data and tasks, and our method is language independent.

The compression of neural networks \cite{han2015deep,han2015learning,chen2015compressing} has become more and more important in order to deploy large networks to small mobile devices. Our work can be seen as a way to compress the embedding layer in neural networks. Most existing network compression techniques focus on dense/convolutional layers that are shared/amortized by all data instances, while one data instance only utilizes a fraction of embedding layer weights associated with the given symbols. To compress these types of weights, some efforts have been made, such as product quantization \cite{jegou2011product,joulin2016fasttext,zhang2014composite,zhang2015sparse,babenko2014additive}. Compared to their methods, our framework is more general. Many of these methods can be seen as a special case of ``KD encoding'' using a linear embedding transformation function without hidden layer. Also, under our framework, both the codes and the transformation functions can be learned jointly by minimizing task-specific losses.

Our work is also related to LightRNN \cite{li2016lightrnn}, which can be seen as a special case of our proposed KD code with $K=\sqrt{N}$ and $D=2$. Due to the use of a more compact code, its code learning is harder and more expensive. This work is an extension of our previous workshop paper \cite{chen2017learning} with guided end-to-end code learning. In parallel to \cite{chen2017learning}, \cite{shu2017compress} explores similar ideas with linear composition functions and pre-trained codes.

\section{Conclusions}
In this paper, we propose a novel K-way D-dimensional discrete encoding scheme to replace the ``one-hot" encoding, which significantly improves the efficiency of the parameterization of models with embedding layers. 
To learn semantically meaningful codes, we derive a relaxed discrete optimization technique based on SGD enabling end-to-end code learning. We demonstrate the effectiveness of our work with applications in language modeling, text classification and graph convolutional networks.

\clearpage\newpage
\section*{Acknowledgements}
We would like to thank anonymous reviewers for their constructive comments. We would also like to thank Chong Wang, Denny Zhou, Lihong Li for some helpful discussions. This work is partially supported by NSF III-1705169, NSF CAREER Award 1741634, and Snapchat gift funds.
\bibliography{content/ref}
\bibliographystyle{icml2018}

\newpage
\appendix{\textbf{\Large Appendix}}
\section{Proofs of Lemmas and Propositions}

\begin{lemma}
	The number of embedding parameters used in KD encoding is $O(\frac{K}{\log K} d' \log N + C)$, where $C$ is the number of parameters of neural nets.
\end{lemma}
\begin{proof}
	As mentioned, the embedding parameters include code embedding matrix $\{\mathcal{W}\}$ and embedding transformation function $\theta_e$. There are $O(\frac{K}{\log K} \log N)$ code embedding vectors with $d'$ dimensions. As for the number of parameters in embedding transformation function such as neural networks (LSTM) $C$ that is in $O(d'^2)$, it can be treated as a constant to the number of symbols since $d'$ is independent of $N$, provided that there are certain structures presented in the symbol embeddings. For example, if we assume all the symbol embeddings are within $\epsilon$-ball of a finite number of centroids in $d$-dimensional space, it should only require a constant $C$ to achieve $\epsilon$-distance error bound, regardless of the vocabulary size, since the neural networks just have to memorize the finite centroids.
\end{proof}.

\begin{proposition}
	A linear composition function $\bm f$ with no hidden layer is equivalent to a sparse binary low-rank factorization of the embedding matrix.
\end{proposition}
\textit{Proof sketch}. First consider when $K=2$, and the composed embedding matrix can be written as $U=BC$, where $B$ is the binary code for each symbol, and $C$ is the code embedding matrix. This is a low rank factorization of the embedding matrix with binary code $B$. When we increase $K$, by representing a choice of $K$ as one-hot vector of size $K$, we still have $U=BC$ with additional constraints in B that it is a concatenation of $D$ one-hot vector. Due to the one-hot constraint, each row in $B$ will be sparse as only $1/K$ ratio of entries are non-zero, thus corresponds to a sparse binary low-rank factorization of the embedding matrix.

As the linear composition with no hidden layer can be limited in some cases as the expressiveness of the function highly relies on the number of bases or rank of the factorization. Hence, the non-linear composition may be more appealing in some cases.
\begin{proposition}
	Given the same dimensionality of the ``KD code'', i.e. K, D, and code embedding dimension $d'$, the non-linear embedding transformation functions can reconstruct the embedding matrix with higher rank than the linear counterpart.
\end{proposition}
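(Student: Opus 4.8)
The plan is to prove the claim in two stages: first pin down a \emph{rank ceiling} for the linear composition function that depends only on $K$, $D$, $d'$, and then exhibit a nonlinear embedding transformation function that, at the \emph{same} $K,D,d'$, reconstructs an embedding matrix of strictly larger rank.

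First I would record the linear ceiling. By the preceding proposition, when $\bm f$ is linear with no hidden layer the reconstructed embedding matrix factors as $U = BC$, where $B\in\{0,1\}^{N\times KD}$ is the block-wise one-hot code matrix and $C\in\mathbb{R}^{KD\times d'}$ stacks $\mathcal{W}^1,\dots,\mathcal{W}^D$ (post-multiplied by $H^\top$ when $d\neq d'$). Hence $\operatorname{rank}(U)\le\min(KD,d')$, and by folding one reference row of each block into a common additive shift this sharpens to $\operatorname{rank}(U)\le D(K-1)+1$. The essential feature is that this bound involves only $K,D,d'$ and is independent of $N$: in the compact/redundant regime $D=O(\log N/\log K)$ it is $O\!\big(\tfrac{K}{\log K}\log N\big)$, far below $N$.

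Next I would construct a nonlinear $\bm f_e$ that beats this bound. Take any $K,D$ with $K^D\ge N$ so the codes $\bm c_1,\dots,\bm c_N$ are distinct, and pick the code embedding matrices $\{\mathcal{W}^j\}$ with pairwise distinct rows (all but a measure-zero set of choices qualify); then $s_i\mapsto(\mathcal{W}^1_{\bm c_i^1},\dots,\mathcal{W}^D_{\bm c_i^D})$ is injective, so $\bm f_e$ receives $N$ distinct inputs in $\mathbb{R}^{Dd'}$. It then remains to argue that a nonlinear $\bm f_e$ can map these $N$ distinct inputs to \emph{arbitrary} targets $\bm v_1,\dots,\bm v_N\in\mathbb{R}^d$, a finite interpolation problem. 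This follows from a standard memorization argument: choose a generic projection $\bm w$ so the scalars $\bm w^\top(\cdot)$ separate the $N$ inputs, then realize each output coordinate as a one-dimensional piecewise-linear (or sigmoidal) function hitting the prescribed values; a one-hidden-layer network with $O(N)$ units suffices, and the LSTM-based transformation introduced earlier can play the same role as a sequential lookup over code dimensions. Taking the targets to be the rows of a rank-$\min(N,d)$ matrix makes $\bm f$ reconstruct an embedding matrix of rank $\min(N,d)$, which exceeds $D(K-1)+1$ in the regime above, proving the claim.

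The step I expect to be the main obstacle is stating the interpolation result at the right level of generality without over-claiming. The construction lets the internal dimension of $\bm f_e$ (the quantity $C$ in the Lemma) grow with $N$, seemingly at odds with the constant-$C$ statement there; I would resolve this by emphasizing that the present proposition concerns \emph{expressive power} at fixed $K,D,d'$ rather than parameter count, and that the two results are consistent — structured embeddings need only constant $C$, while in the worst case the extra capacity in $\bm f_e$ is exactly what purchases the higher rank. A minor point to dispatch in passing is the genericity of $\{\mathcal{W}^j\}$ guaranteeing injectivity of the code-embedding map, which is immediate since the degenerate configurations lie in a measure-zero set.
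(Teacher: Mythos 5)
Your proposal is correct and takes essentially the same route as the paper's own proof sketch: bound the linear case via the factorization $U=BC$ (rank at most $KD$, which you sharpen slightly to $D(K-1)+1$), then observe that with distinct codes and distinct code-embedding rows a sufficiently expressive nonlinear $\bm f_e$ can map the $N$ distinct inputs to essentially arbitrary targets, yielding rank up to $\min(N,d)$. You merely make rigorous what the paper leaves implicit (genericity of $\{\mathcal{W}^j\}$, the finite interpolation/memorization step, and the reconciliation with the constant-$C$ claim in Lemma 1).
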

\textit{Proof sketch.} As shown above, in the linear case, we approximate the embedding by a low-rank factorization, $U=BC$. The rank will be constrained by the dimensionality of binary matrix $B$, i.e. $KD$. However, if we consider a nonlinear transformation function $f$, we will have $U = f(B, C)$. As long as that no two rows in $B$ and no two columns in $C$ are the same, i.e. every data point has its quite code and every code has its unique embedding vector, then the non-linear function $f$, such as a neural network with enough capacity, can approximate a matrix $U$ that has much higher rank, even full rank, than $KD$.

\section{The LSTM Code Embedding Transformation Function}

Here we present more details on the LSTM code embedding transformation function. Assuming the code embedding dimension is the same as the LSTM hidden dimension, the formulation is given as follows.
\begin{gather*}
\bm{t}_j = \sigma(\mathcal{W}^j_{\bm{c}^j} + \bm{h}_{j-1} U_{t} + \bm{b}_t) \\
\bm{i}_j = \sigma(\mathcal{W}^j_{\bm{c}^j} + \bm{h}_{j-1} U_{i} + \bm{b}_i) \\
\bm{o}_j = \sigma(\mathcal{W}^j_{\bm{c}^j} + \bm{h}_{j-1} U_{t} + \bm{b}_t) \\
\bm{m}_j = \bm{t}_j \circ \bm{m}_{j-1} + \bm{i}_j \circ \tanh(\mathcal{W}^j_{\bm{c}^j}+ U_{m} \bm{h}_{j-1} + \bm{b}_m) \\
\bm{h}_j = \bm{o}_j \circ \tanh(\bm{m}_j),
\end{gather*}
where $\sigma(\cdot)$ and $\tanh(\cdot)$ are, respectively, standard sigmoid and tanh activation functions. Please note that the symbol index $i$ is ignored for simplicity. 

\section{Examples and Applications}
Our proposed task-specific end-to-end learned ``KD Encoding" can be applied to any problem involving learning embeddings to reduce model size and increase efficiency.  In the following, we list some typical examples and applications, for which detailed descriptions can be found in the supplementary material. 

\paragraph{Language Modeling} 
Language modeling is a fundamental problem in NLP, and it can be formulated as predicting the probability over a sequence of words. Models based on recurrent neural networks (RNN) with word embedding \cite{mikolov2010recurrent,kim2016character} achieve state-of-the-art results, so on which we will base our experiments. A RNN language model estimates the probability distribution of a sequence of words by modeling the conditional probability of each word given its preceding words, 
\begin{equation}
P(w_0,...,w_N) = P(w_0) \prod_{i=1}^N P(w_i|w_0,...,w_{i-1}), 
\end{equation}
where $w_i$ is the $i$-th word in a vocabulary, and the conditional probability $P(w_i|w_0,...,w_{i?1})$ can be naturally modeled by a softmax output at the $i$-th time step of the RNN. The RNN parameters and the word embeddings are model parameters of the language model.
\paragraph{Text Classification} 
Text classification is another important problem in NLP with many different applications. In this problem, given a training set of documents with each containing a number of words and its target label, we learn the embedding representation of each word and a binary or multi-class classifier with a logistic or softmax output, predicting the labels of test documents with the same vocabulary as in the training set. To test the ``KD Encoding" of word embedding on several typical text classification applications, we use several different types of datasets: Yahoo answer and AG news represent topic prediction, Yelp Polarity and Yelp Full represent sentiment analysis, while DBpedia represents ontology classification.
\paragraph{Graph Convolutional Networks for Semi-Supervised Node Classification}
In \cite{kipf2016semi}, graph convolutional networks (GCN) are proposed for semi-supervised node classification on undirected graphs. In GCN, the matrix based on standard graph adjacency matrix with added self connections after normalization, $\hat{A}$, is used to approximate spectral graph convolutions. As a result, $ReLU(\hat{A} X W)$ defines a non-linear convolutional feature transformation on node embedding matrix $X$ with a projection matrix $W$ and non-linear activation function $ReLU$. This layer-wise transformation can be repeated to build a deep network before making predictions using the final output layer. Minimizing a task-specific loss function, the network weights $W$s and the node embedding matrix $X$ are learned simultaneously using standard back-propagation. A simple GCN with one hidden layer takes the following form:
\begin{equation}
Z = f(X, A) = \textnormal{softmax} (\hat{A} \textnormal{ReLU}( \hat{A}XW_0) W_1),\\
\end{equation}
where $W_0$ and $W_1$ are network weights, and softmax is performed in a row-wise manner. When the labels of only a subset of nodes are given, this framework is readily extended for graph-based semi-supervised node classification by minimizing the following loss function,
\begin{equation}
L = -\sum_{l=1}^L \sum_{f=1}^F Y_{lf} \ln Z_{lf} ,
\end{equation}
where $L$ is the number of labeled graph nodes, $F$ is the total number of classes of the graph nodes, and $Y$ is a binary label matrix with each row summing to $1$. We apply our proposed KD code learning to graph node embeddings in the above GCN framework for semi-supervised node classification.
\paragraph{Hashing}
The learned discrete code can also be seen as a data-dependent hashing for fast data retrieval. In this paper, we also perform some case studies evaluating the effectiveness of our learned KD code as hash code.

\section{Additional Experimental Results}
We also test the effects of different code embedding dimensions, and the result is presented in Figure \ref{fig:code_emb_dim}. We found that linear encoder requires larger code embedding dimensionality, while the non-linear encoder can work well with related small ones. This again verifies the proposition 2.
\begin{figure}[!t]
	\begin{center}
		\includegraphics[width=0.3\textwidth]{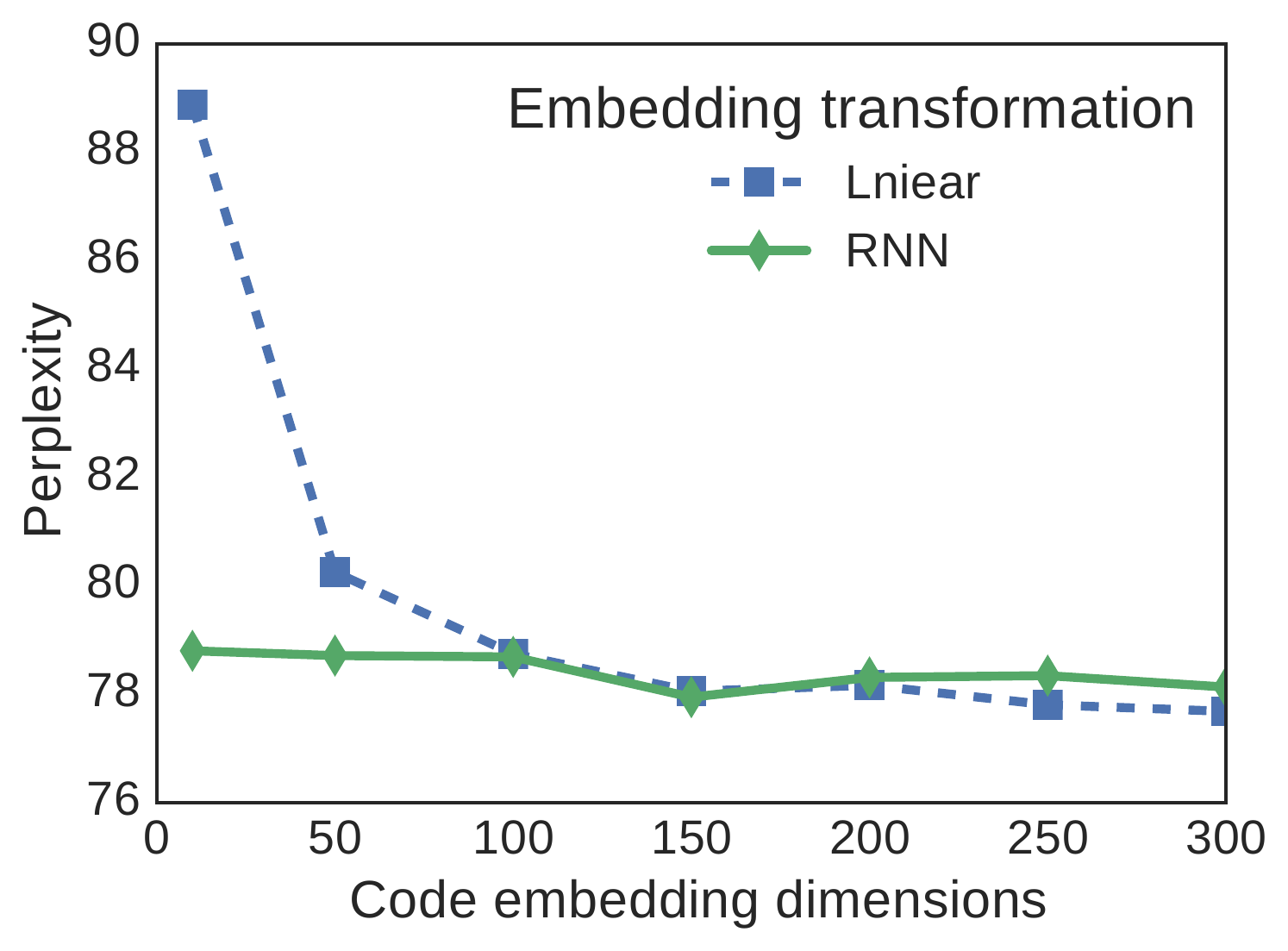}
	\end{center}
	\caption{\label{fig:code_emb_dim} The perplexity on PTB as a function of different code embedding dimensions as well as the embedding transformation functions.}
\end{figure}

Table \ref{tab:opt_tricks} shows the effectiveness of variants of the tricks in continuous relaxation based optimization. We can clearly see that the positive impacts of temperature scheduling, and/or entropy regularization, as well as the auto-encoding. However, here the really big performance jump is brought by using the proposed distillation guidance.

\begin{table}[!t]
\centering
\caption{Effectiveness of different optimization tricks. Here, CR=Continuous Relaxation using softmax, STE=straight-through estimation, CDG=continuous distillation guidance.}
\label{tab:opt_tricks}
\begin{tabular}{ll}
\hline \hline
Variants                                        & PPL   \\ \hline
CR                                              & 90.61 \\
CR + STE                                        & 90.15 \\
CR + STE + temperature scheduling               & 89.55 \\
CR + STE + entropy reg                          & 89.03 \\
CR + STE + entropy reg + PDG (w/o autoencod.) & 83.71 \\
CR + STE + entropy reg + PDG (w/ autoencod.)  & 83.11 \\ \hline \hline
\end{tabular}
\end{table}


\section{Notations}
For clarity, Table 1 provides explanations to major notations used in our paper.

\newpage
\begin{table}[H]
	\centering
	\label{tab:notations}
	\caption{Notations}
	\begin{tabular}{cp{17em}}
		\hline
		Notations & Explanation \\ \hline\hline
		$\bm c$& Codes.           \\
		$\bm{o}$& One-hot representations of the code.             \\
		$\bm{\hat{o}}$& Continuously relaxed $\bm o$.            \\
		$\bm{\pi}$& code logits for computing $\bm{\hat{o}}$.            \\
		$\mathcal{W}$& Code embedding matrix.             \\     
		$\mathcal{T}$& The transformation from symbol to the embedding , $\mathcal{T}=\bm f \circ \bm\phi .$\\
		$\bm{\phi}$& The transformation from symbol to code. \\
		$\bm{f}$& The code transformation function maps code to embedding. It has parameters $\theta = \{\mathcal{W}, \theta_e\}$\\
		$\bm{f}_e$& The embedding transformation function maps code embedding vectors to a symbol embedding vector. \\
		$\bm{v}$& The composite symbol embedding vector. \\
		$\Theta$& The task-specific (non-embedding) parameters.\\
		$\mathcal{U}$& Pre-trained symbol embedding matrix.  \\
		$\bm{u}$& Pre-trained symbol embedding vector.  \\  
		$d$& Symbol embedding dimensionality. \\
		$d'$& Code embedding dimensionality. \\
		\hline
	\end{tabular}
\end{table}

\end{document}